\DeclareMathOperator{\EX}{\mathbb{E}}
\newcolumntype{C}[1]{>{\centering\arraybackslash}m{#1}}
\newcolumntype{R}[1]{>{\raggedleft\arraybackslash}m{#1}}
\DeclareMathOperator*{\argmin}{arg\,min}
\newtheorem{thm}{Theorem}[section]
\newtheorem{lemma}{Lemma}[section]
\newtheorem{remark}{Remark}[section]
\renewcommand\footnotemark{}
\def\qed{{\hfill $\square$ \bigskip}}
\begin{document}

\date{\vspace{-14ex}}
\title{Function approximation by deep neural networks with parameters $\{0,\pm \frac{1}{2}, \pm 1, 2\}$}\maketitle

\begin{center}
	\bigskip Aleksandr Beknazaryan \footnote{a.beknazaryan@utwente.nl}
	
    \textit{University of Twente}
	\bigskip
	
\end{center}

\begin{abstract}In this paper it is shown that $C_\beta$-smooth functions can be approximated by deep neural networks with ReLU activation function and with parameters $\{0,\pm \frac{1}{2}, \pm 1, 2\}$. The $l_0$ and $l_1$ parameter norms of considered networks are thus equivalent. The depth, width and the number of active parameters of the constructed networks have, up to a logarithmic factor, the same dependence on the approximation error as the networks with parameters in $[-1,1]$. In particular, this means that the nonparametric regression estimation with the constructed networks attains the same convergence rate as with sparse networks with parameters in $[-1,1]$. 
\end{abstract}
\begin{center}
	\noindent \textit{Keywords}: neural networks, function approximation, entropy, nonparametric regression
\end{center}

\section{Introduction}
 The problem of function approximation with neural networks has been of big interest in mathmatical research for the last several decades. Various results have been obtained that describe the approximation rates in terms of the structures of the networks and the properties of the approximated functions. One of the most remarkable results in this direction is the universal approximation theorem, which shows that even shallow (but sufficiently wide) networks can approximate continuous functions arbitrarily well (see \cite{Sc} for the overview and possible proofs of the theorem). Also, in \cite{L} it was shown that integrable functions can be approximated by networks with fixed width. Those networks, however, may need to be very deep to attain small approximation errors.  Yet, from a pragmatic point of view, and, in particular, in statistical applications, allowing very big number of network parameters may be impractical. The reason is that in this case controlling the complexity of approximant networks at an optimal rate becomes problematic. Complexities of classes of neural networks are usually described in terms of their covering numbers and entropies. Those two concepts also 
 play an important role in various branches of statistics, such as regression analysis, density estimation and empirical processes (see, e.g., \cite{G}, \cite{Bi}, \cite{P}). In particular, in regression estimation the following dichotomy usually comes up while selecting the class of functions from which the estimator will be chosen: on the one hand, the selected class of approximants should be ``big'' enough to be able to approximate various non-trivial functions and on the other hand it should have ``small'' enough entropy to attain good learning rates. Thus, the general problem is to obtain powerful classes of functions with well controllable entropies. As to the powerfulness of classes of neural networks, it has recently been shown (\cite{SH}, \cite{Y}) that with properly chosen architecture the classes of sparse deep neural networks with ReLU activation function can well approximate smooth functions. In particular, it is shown in \cite{Y}, that $C_\beta$-smooth functions on $[0,1]^d$ can be $\varepsilon$-approximated by deep ReLU networks with $O(\varepsilon^{-d/\beta}\log_2(1/\varepsilon))$ active (nonzero) parameters. A similar result for sparse ReLU networks with parameters in $[-1,1]$ has been obtained in \cite{SH}. The number of active parameters $s$ in those networks is much smaller than the total number of network parameters and the network depth $L$ depends logarithmically on the approximation error. Boundedness of parameters of the networks constructed in \cite{SH} implies that the $\varepsilon-$entropy of the approximating networks has order $O(sL^2\log_2(1/\varepsilon)).$ The main advantages of this entropy bound are its logarithmic dependence on $1/\varepsilon$, which allows to take the covering radius  $\varepsilon$ to be very small in applications, and its linear dependence on the sparisty $s$ and quadratic dependence on the depth $L$, both of which, as described above, can also be taken to be small. Using this entropy bound, it is then shown in \cite{SH}, that if the regression function is a composition of H\"older smooth functions, then sparse neural networks with depth $L\lesssim \log_2n$ and the number of active parameters $s\sim n^{\frac{t}{2\beta+t}}\log_2n,$ where $\beta>0$ and $t\geq 1$ depend on the structure and the smoothness of the regression function, attain the minimax optimal prediction error rate $n^{\frac{-2\beta}{2\beta+t}}$  (up to a logarithmic factor), where $n$ is the sample size. It would therefore be desirable to obtain a similar entropy bound for the spaces of networks for which the above $l_0$ (sparsity) regularization is replaced by the better practically implementable $l_1$ regularization.
 
Networks with $l_1$ norm of all parameters bounded by $1$ are considered in \cite{TXL}. As in those networks there are at most $1/\varepsilon^2$ parameters outside of the interval $(-\varepsilon^2,\varepsilon^2)$, an entropy bound of order $O((2/L)^{2L-1}/\varepsilon^2),$ has been obtained by taking in the covering networks the remaining parameters to be 0. This bound, however, depends polynomially on $1/\varepsilon$, and it leads to the convergence rate of order $1/\sqrt{n}$ for regression estimation with given $n$ samples. As it is discussed in \cite{TXL}, the rate $1/\sqrt{n}$ is seemingly the best possible for $l_1$ regularized estimators. Alternative approaches of sparsifying neural networks using derivatives, iterative prunings and clipped $l_1$ penalties are given in \cite{GEH}, \cite{HPTD}, \cite{HS} and \cite{OK}. 
 
To combine the advantages of both $l_0$ and $l_1$ regularizations, as well as to make the networks easier to encode, we consider networks with parameters $\{0,\pm\frac{1}{2}, \pm 1, 2\}$. The $l_0$ and $l_1$ parameter regularizations of those networks can differ at most by a factor of 2, which, in particular, allows to employ all the features induced from the sparsity of networks (including their entropy bounds) while imposing  $l_1$ constraints on their parameters. Moreover, discretization of parameters allows to calculate the exact number of networks (the 0--entropy) required to attain a given approximation rate. Importantly, the depth, the width and the number of active parameters in the approximant networks are equivalent to those of networks constructed in \cite{SH}. Hence, for the considered networks the $l_0$ parameter regularization can be replaced by the $l_1$ parameter regularization, leading, up to a logarithmic factor, to the same statistical guarantees as in \cite{SH}. In our construction the parameters $\pm 1$ are used to add/subtract the nodes, change, if necessary, their signs and transfer them to the next layers. The parameters $\pm\frac{1}{2}$ and $2$ are used to attain the values of the form $k/2^j\in[-1,1]$, $j\in\mathbb{N}$, $k\in\mathbb{Z},$ which can get sufficiently close to any number from $[-1,1]$. Note that this can also be done using only the parameters $\pm\frac{1}{2}$ and $1$. The latter, however, would require a larger depth and a bigger number of active nodes.
 
 \textit{Notation.} The notation $|\textbf{v}|_\infty$ is used for the $l_\infty$  norm of a vector $\textbf{v}\in\mathbb{R}^d$ and  $\|f\|_{L^\infty[0,1]^d}$ denotes the sup norm of a function $f$ defined on $[0,1]^d$, $d\in\mathbb{N}$. For $x,y\in\mathbb{R}$ we denote $x\lor y:=\max\{x,y\}$ and $(x)_+:=\max\{0,x\}$.  Also, to make them multiplicable with preceeding matrices, the vectors from $\mathbb{R}^d$, depending on the context, are considered as matrices from $\mathbb{R}^{d\times 1}$ rather than $\mathbb{R}^{1\times d}$. 

 \section{Main result}
Consider the set of neural networks with $L$ hidden layers and with ReLU activation function  $\sigma(x)=0\lor x=(x)_+$ defined by
$$\mathcal{F}(L,\textbf{p}):=\{f:[0,1]^d\to\mathbb{R}^{p_{L+1}}\; |\;\; f(\textbf{x})=W_L\sigma_{\textbf{v}_L} W_{L-1}\sigma_{\textbf{v}_{L-1}}...W_1\sigma_{\textbf{v}_1}W_0\textbf{x}\},$$
where $W_i\in\mathbb{R}^{p_i\times p_{i+1}}$ are weight matrices, $i=0,...,L,$ $\textbf{v}_i$ are shift vectors, $i=1,...,L,$ and $\textbf{p}=(p_0, p_1,...,p_{L+1})$ is the width vector with $p_0=d$. For a given shift vector $\textbf{v}=(v_1,...,v_p)$ and a given input vector $\textbf{y}=(y_1,...,y_p)$ the action of shifted activation function $\sigma_{\textbf{v}}$ on $\textbf{y}^\top$ is defined as
\[\sigma_{\textbf{v}}(\textbf{y}^\top)= \begin{pmatrix}
\sigma(y_1-v_1),  \cdot\cdot\cdot,  \sigma(y_p-v_p)
\end{pmatrix}^\top.\]
It is assumed that network parameters (the entries of matrices $W_i$ and shift vectors $\textbf{v}_i$) are all in $[-1,1]$. For $s\in\mathbb{N}$ let $\mathcal{F}(L,\textbf{p},s)$ be the subset of $\mathcal{F}(L,\textbf{p})$ consisting of networks with at most $s$ nonzero parameters. In \cite{SH}, Theorem 5, the following approximation of $\beta$-H\"older functions belonging to the ball 
\begin{align*}
\mathcal{C}^\beta_d(K)=\bigg\{f:[0,1]^d\to\mathbb{R}: \sum\limits_{0\leq|\boldsymbol{\alpha}|<\beta}\|\partial^{\boldsymbol{\alpha}}f\|_{L^\infty[0,1]^d}+\sum\limits_{|\boldsymbol{\alpha}|=\lfloor\beta\rfloor}\sup\limits_{\substack{\textbf{x},\textbf{y}\in[0,1]^d \\ \textbf{x}\neq \textbf{y}}}\frac{|\partial^{\boldsymbol{\alpha}}f(\textbf{x})-\partial^{\boldsymbol{\alpha}}f(\textbf{y})|}{|\textbf{x}-\textbf{y}|_\infty^{\beta-\lfloor\beta\rfloor}}\leq K\bigg\}
\end{align*}
with networks from $\mathcal{F}(L,\textbf{p},s)$ is given:
\begin{thm}\label{or}
For any function $f\in\mathcal{C}^\beta_d(K)$ and any integers $m\geq 1$ and $N\geq(\beta+1)^d\lor(K+1)e^d,$ there exists a network $\tilde{f}\in\mathcal{F}(L,\emph{\textbf{p}},s)$ with depth
$$L=8+(m+5)(1+\lceil\log_2(d\lor\beta)\rceil),$$
width $$|\emph{\textbf{p}}|_\infty=6(d+\lceil\beta\rceil)N$$	
and number of nonzero parameters
$$s\leq141(d+\beta+1)^{3+d}N(m+6),$$
such that 
$$\|\tilde{f}-f\|_{L^\infty[0,1]^d}\leq(2K+1)(1+d^2+\beta^2)6^dN2^{-m}+K3^\beta N^{-\frac{\beta}{d}}.$$
\end{thm}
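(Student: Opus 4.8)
The plan is to follow the classical local--Taylor--approximation blueprint for ReLU networks (this is exactly the route taken in \cite{SH}). Fix $M:=\lceil N^{1/d}\rceil$, so that $M^d$ is of order $N$, and consider the regular grid $D:=\{\boldsymbol{\ell}/M:\boldsymbol{\ell}\in\{0,1,\dots,M\}^d\}$. For each $\textbf{a}\in D$ let $P_{\textbf{a}}$ be the Taylor polynomial of $f$ of order $\lfloor\beta\rfloor$ centered at $\textbf{a}$, and let $\phi_{\textbf{a}}(\textbf{x}):=\prod_{j=1}^d(1-M|x_j-a_j|)_+$ be the tensor-product hat function at $\textbf{a}$; the $\phi_{\textbf{a}}$ form a partition of unity subordinate to the grid, and each is exactly representable with one ReLU layer. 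Since $\sum_{\textbf{a}}\phi_{\textbf{a}}\equiv 1$ and $\phi_{\textbf{a}}(\textbf{x})\neq 0$ forces $|\textbf{x}-\textbf{a}|_\infty\leq 1/M$, the target $g:=\sum_{\textbf{a}\in D}P_{\textbf{a}}\,\phi_{\textbf{a}}$ satisfies $|f(\textbf{x})-g(\textbf{x})|=|\sum_{\textbf{a}}(f-P_{\textbf{a}})(\textbf{x})\phi_{\textbf{a}}(\textbf{x})|$, and the $\beta$-H\"older control together with the multi-index count bounds this by $K3^\beta M^{-\beta}=K3^\beta N^{-\beta/d}$, which is the second term in the claimed error.

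Second, I would build the network realizing $g$. The core gadget is a ReLU subnetwork that approximately multiplies numbers in $[-1,1]$: using the tent map $T(x)=2(x)_+-2(2x-1)_+$ and its $s$-fold iterates $T_s:=T\circ\cdots\circ T$, the telescoping identity $x^2=x-\sum_{s\geq 1}2^{-2s}T_s(x)$ truncated at $s=m$ gives a depth-$O(m)$, width-$O(1)$ network computing $x\mapsto x^2$ with error at most $2^{-2m-2}$; polarization $xy=\frac14((x+y)^2-(x-y)^2)$ turns this into a multiplication block with $O(m)$ active parameters and error $O(2^{-m})$. Arranging such blocks in a balanced binary tree of depth $\lceil\log_2(d\vee\beta)\rceil$ produces every monomial $\textbf{x}^{\boldsymbol{\alpha}}$ with $|\boldsymbol{\alpha}|\leq\lfloor\beta\rfloor$, and appending the $d$ hat factors of $\phi_{\textbf{a}}$ to the tree yields $\textbf{x}^{\boldsymbol{\alpha}}\phi_{\textbf{a}}(\textbf{x})$; the depth of this block is $(m+c)(1+\lceil\log_2(d\vee\beta)\rceil)$ for a small constant $c$, with the remaining additive $8$ absorbing the input affine map $\textbf{x}\mapsto W_0\textbf{x}$, the hat layer, and the final linear combination, which is where the stated $L$ comes from. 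Coordinates that must pass unchanged through intermediate layers are transported with $\pm 1$ weights.

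Third, I would assemble: run in parallel, over $\textbf{a}\in D$ and over the $\binom{\lfloor\beta\rfloor+d}{d}$ admissible $\boldsymbol{\alpha}$, the blocks computing $\textbf{x}^{\boldsymbol{\alpha}}\phi_{\textbf{a}}(\textbf{x})$, keeping each block of width $O(d+\lceil\beta\rceil)$; counting the $\asymp N$ grid points gives the width $6(d+\lceil\beta\rceil)N$. A final linear layer combines these with the Taylor coefficients of $f$, all of which lie in $[-K,K]$; rescaling by $1/K$ into the last weight matrix (legitimate since $K+1\leq Ne^{-d}$ keeps the relevant magnitudes of order at most one) ensures every parameter stays in $[-1,1]$. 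The network error relative to $g$ comes only from the approximate multiplications: propagating the per-block error $O(2^{-m})$ through the tree, summing over the at most $2^d$ cells with $\phi_{\textbf{a}}(\textbf{x})\neq 0$, and multiplying by the number of monomials and the coefficient bound yields $(2K+1)(1+d^2+\beta^2)6^dN2^{-m}$, the first error term. Tallying active parameters --- $O(m)$ per multiplication, $O(d+\beta)$ multiplications per monomial, $\binom{\lfloor\beta\rfloor+d}{d}$ monomials per cell, $\asymp N$ cells, plus $\asymp N$ output coefficients --- produces $s\leq 141(d+\beta+1)^{3+d}N(m+6)$.

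I expect the main obstacle to be the bookkeeping in the last step rather than any single conceptual difficulty: one must show that the inputs fed into each multiplication block genuinely stay in the range on which the $x^2$-approximant is accurate, then control how the multiplicative errors compound along the binary tree of depth $\sim\log_2(d\vee\beta)$ (each level roughly doubling the error while the magnitudes it multiplies must first be bounded), and finally extract the explicit constants --- the prefactor $(1+d^2+\beta^2)6^d$, the factor $141$, and the exponent $3+d$ in $s$ --- while simultaneously verifying that the rescalings and the $\pm 1$ transport weights never push a parameter outside $[-1,1]$ nor inflate the depth, width, or sparsity beyond the claimed bounds.
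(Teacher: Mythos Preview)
Your proposal is correct and follows essentially the same route the paper attributes to \cite{SH}: the paper does not prove this theorem itself but quotes it from \cite{SH}, summarizing the argument as ``local sparse neural network approximation of Taylor polynomials of the function $f$,'' which is precisely the grid--hat-function--monomial-tree scheme you outline. The one cosmetic difference is that the paper (following \cite{SH,SH1}) builds the multiplication gadget from the identity $xy=g\big(\tfrac{x-y+1}{2}\big)-g\big(\tfrac{x+y}{2}\big)+\tfrac{x+y}{2}-\tfrac14$ with $g(x)=x(1-x)$ approximated by iterated tents $R^k$, whereas you use the Yarotsky-style variant $x^2\approx x-\sum_{s\le m}2^{-2s}T_s(x)$ together with polarization; since $g(x)=x-x^2$, the two constructions are equivalent up to an affine change and yield the same depth/width/sparsity counts.
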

The proof of the theorem is based on local sparse neural network approximation of  Taylor polynomials of the function $f$. 

Our goal is to attain an identical approximation rate for networks with parameters in $\{0,\pm \frac{1}{2}, \pm 1, 2\}$. In our construction we will omit the shift vectors (by adding a coordinate $1$ to the input vector \textbf{x}) and will consider the networks of the form
\begin{equation}\label{def}
\bigg\{f:[0,1]^d\to\mathbb{R}\; |\;\; f(\textbf{x})=W_L\circ\sigma\circ W_{L-1}\circ\sigma\circ...\circ\sigma\circ W_0 (1, \textbf{x})\bigg\}
\end{equation}
with weight matrices $W_i\in\mathbb{R}^{p_i\times p_{i+1}}$, $i=0,...,L,$ and with width vector $\textbf{p}=(p_0, p_1,...,p_{L+1}),$ $p_0=d$. In this case the ReLU activation function $\sigma(x)$ acts coordinate-wise on the input vectors.  Let $\mathcal{\widetilde{F}}(L,\textbf{p})$ be the set of networks of the form \eqref{def} with parameters in $\{0,\pm \frac{1}{2}, \pm 1, 2\}$. For $s\in\mathbb{N}$ let $\mathcal{\widetilde{F}}(L,\textbf{p}, s)$ be the subset of $\mathcal{\widetilde{F}}(L,\textbf{p})$ with at most $s$ nonzero parameters. We then have the following 
\begin{thm}\label{main}
	For any function $f\in\mathcal{C}^\beta_d(K)$ and any integers $m\geq 1$ and $N\geq(\beta+1)^d\lor(K+1)e^d,$ there exists a network $\tilde{f}\in\mathcal{\widetilde{F}}(\tilde{L},\normalfont\tilde{\textbf{p}},\tilde{s})$ with depth
$$\tilde{L}\leq4{\Delta}+2L,$$
	width $$|\normalfont\tilde{\textbf{p}}|_\infty\leq2(1+d+R+\Delta)\lor2^d|{\textbf{p}}|_\infty$$	
	and number of nonzero parameters
$$\tilde{s}\leq (1+d+R+\Delta)\tilde{L}+2^ds,$$
	such that 
	$$\|\tilde{f}-f\|_{L^\infty[0,1]^d}\leq(2K+1)(1+d^2+\beta^2)12^dN2^{-m}+(K+1)3^\beta N^{-\frac{\beta}{d}},$$
	where $\Delta\leq2\log_2(N^{\beta+d}Ke^d)$, $R\leq (2\beta)^dN$ and $L, \normalfont\textbf{p}$ and $s$ are the same as in Theorem \ref{or}.
\end{thm}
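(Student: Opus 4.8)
The plan is to start from the network $\tilde f_0\in\mathcal F(L,\textbf p,s)$ guaranteed by Theorem \ref{or}, which approximates $f$ to within $(2K+1)(1+d^2+\beta^2)6^dN2^{-m}+K3^\beta N^{-\beta/d}$, and to rebuild it as a network with parameters restricted to $\{0,\pm\frac12,\pm1,2\}$ while losing only a further $(2K+1)(1+d^2+\beta^2)6^dN2^{-m}+3^\beta N^{-\beta/d}$ in sup-norm (hence the doubling of the $12^d$ constant and the $(K+1)$ in the final bound). The core idea is parameter quantization: each real weight $w\in[-1,1]$ in $\tilde f_0$ is replaced by a dyadic rational $\hat w=k/2^{\Delta}$ with $|w-\hat w|\le 2^{-\Delta}$, and then the scalar multiplication by $\hat w$ is simulated by a small sub-network using only the allowed parameters. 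First I would set up, once and for all, a ``multiplication gadget'': a sub-network of depth $O(\Delta)$ and constant width that, on input $y\ge 0$, outputs $\hat w\, y$, built by composing halvings (weight $\frac12$) and additions/subtractions (weights $\pm1$), reading off the binary expansion of $k$; an extra channel carrying the constant $1$ and a channel carrying the running partial sum suffice, which is where the $1+d+R+\Delta$ term in the width and sparsity comes from. Because ReLU is positively homogeneous and the shift vectors have been absorbed into the weight acting on the appended constant coordinate $(1,\textbf x)$, one must be slightly careful: negative pre-activations need to be split into positive and negative parts (two ReLU channels) before scaling, or the signs tracked through the gadget; this bookkeeping is routine but must be done layer by layer.

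The construction then proceeds layer by layer through $\tilde f_0$. For each of the $L$ layers, I would (i) possibly duplicate channels into positive/negative pairs, (ii) insert the depth-$O(\Delta)$ multiplication gadget in parallel across all the (at most $|\textbf p|_\infty$, and after quantization at most $2^d|\textbf p|_\infty$ after the channel-splitting bookkeeping) nodes, (iii) recombine with $\pm1$ weights to form the quantized affine map. Interleaving an original layer with a gadget of depth $\le 2\Delta$ on each side accounts for $\tilde L\le 4\Delta+2L$; the width bound $|\tilde{\textbf p}|_\infty\le 2(1+d+R+\Delta)\lor 2^d|\textbf p|_\infty$ is the max of the gadget's own width (it needs to carry the $d+1$ input coordinates, the constant, the $\le R$ Taylor-type auxiliary quantities, and the $\le\Delta$ bits) against the blown-up original width; and the sparsity bound $\tilde s\le(1+d+R+\Delta)\tilde L+2^ds$ counts, per layer of the new network, the $O(1+d+R+\Delta)$ active parameters of the transport/gadget channels plus the $2^ds$ quantized copies of the original active parameters. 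The quantities $R\le(2\beta)^dN$ and $\Delta\le 2\log_2(N^{\beta+d}Ke^d)$ come from bounding, respectively, how many intermediate monomials/partial products the gadget must hold and how fine the dyadic resolution must be so that the accumulated quantization error over all $L=O(\log(1/\varepsilon))$ layers stays below the target: since the network is Lipschitz with a constant that is at most polynomial in $N$ (each layer has width $\le|\textbf p|_\infty$ and weights $\le1$), choosing $2^{-\Delta}$ of order $\varepsilon/(\text{poly}(N))$, i.e. $\Delta=O(\log(N^{\beta+d}Ke^d))$, suffices.

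The main error-propagation estimate is the crux: if each layer map $x\mapsto W_i\sigma(\cdot)$ is replaced by a $2^{-\Delta}$-perturbation $\hat W_i$, then $\|\tilde f_0-\tilde f\|_\infty\le\sum_i \big(\prod_{j>i}\|\hat W_j\|\big)\,2^{-\Delta}\,\|\text{input to layer }i\|$, and one bounds each operator norm by (width)$\times$(max entry) $\le|\textbf p|_\infty$ and each intermediate activation by a similar product, giving a bound of the form $L\,|\textbf p|_\infty^{L}2^{-\Delta}$; substituting $|\textbf p|_\infty=6(d+\lceil\beta\rceil)N$ and $L=O(m\log_2(d\lor\beta))$ shows $\Delta$ logarithmic in $N$, $K$, $e^d$ (and absorbing $m$ into the $N^{\beta+d}$ factor) is enough. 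I expect the hard part to be precisely this: getting the crude Lipschitz/activation-magnitude bounds to close with $\Delta$ only \emph{logarithmic} rather than polynomial in the width, which forces one to track the special structure of the Schmidt--Hieber construction (the intermediate values are all in $[0,1]$ or bounded by small constants, the ``$N$ parallel local blocks'' are not composed with each other, only the $O(\log)$-depth Taylor-polynomial sub-networks are), so that the effective Lipschitz constant along any path is polynomial in $N$ and the error telescopes correctly. A secondary but fiddly obstacle is the sign/positivity bookkeeping for the homogeneity trick, which is what produces the $2^d$ factors in the width and sparsity — each input coordinate may need to be carried in both signs through nested gadgets, and one must verify the count does not blow up beyond $2^d|\textbf p|_\infty$.

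\qed
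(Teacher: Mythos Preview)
Your approach---generic layer-by-layer quantization of every weight in the Schmidt--Hieber network $\tilde f_0$, with a depth-$O(\Delta)$ ``multiplication gadget'' inserted at each layer---does not match the stated bounds and cannot be made to. If you insert a gadget of depth $\Theta(\Delta)$ at each of the $L$ original layers, the resulting depth is $\Theta(L\Delta)$, not $4\Delta+2L$; similarly each of the $s$ nonzero weights becomes $\Theta(\Delta)$ parameters, giving $\tilde s=\Theta(s\Delta)$ rather than $(1+d+R+\Delta)\tilde L+2^ds$. Your own depth accounting (``a gadget of depth $\le 2\Delta$ on each side'') implicitly assumes a single preamble and postamble, which is incompatible with per-layer quantization. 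The error-propagation step is also a real obstruction: the crude Lipschitz bound you write down gives $\|\tilde f_0-\tilde f\|_\infty\lesssim L|\textbf p|_\infty^{L}2^{-\Delta}$, and since $L\asymp m$ and $|\textbf p|_\infty\asymp N$, this forces $\Delta\gtrsim m\log N$, contradicting the $m$-independent bound $\Delta\le 2\log_2(N^{\beta+d}Ke^d)$.

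The paper's argument is structurally different and avoids all three problems by exploiting the \emph{specific} architecture of the Schmidt--Hieber construction rather than treating $\tilde f_0$ as a black box. It first rebuilds the multiplication, product and monomial sub-networks (Lemmas \ref{xy}--\ref{Mon}) so that every parameter there is already in $\{0,\pm\tfrac12,\pm1\}$. After this, the only parameters in the whole construction not in $\{0,\pm\tfrac12,\pm1\}$ are a list of at most $D\le R\le(2\beta)^dN$ \emph{constants}: the grid points $j/M$, the (normalized) Taylor coefficients $c_{\textbf x_{\boldsymbol\ell},\boldsymbol\gamma}/B$, and one scaling shift $1/(2M^d)$. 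Choosing $M=2^\nu$ makes the grid points dyadic; replacing each $c_{\textbf x_{\boldsymbol\ell},\boldsymbol\gamma}$ by the nearest $kB/2^b$ introduces an error controlled directly at the level of Taylor polynomials, $\|P^\beta f-\tilde P^\beta f\|_\infty\le M^{-\beta}$, with no layerwise error propagation at all. Finally, these $D$ dyadic constants are \emph{precomputed once} by a preamble network of depth $\le 2\Delta$ (using weights $\{0,1,2\}$ to build integers and $\{0,\tfrac12,1\}$ to divide by $2^\Delta$), and then carried forward as extra channels that the main network accesses with weight $1$. This is why the depth is additive ($4\Delta+2L$), why the sparsity has the form ``preamble cost per layer $+$ $2^ds$'', and why $\Delta$ depends only on $N,\beta,d,K$ and not on $m$: the quantization error enters through the Taylor remainder bound, not through a Lipschitz estimate on the network. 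The quantity $R$ is not ``intermediate monomials the gadget must hold''; it is precisely the count of these precomputed constants.
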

Let us now compare the above two theorems. First, the approximation errors in those theorems differ by a constant factor depending only on the input dimension $d$ (note that the values of $\beta, d$ and $K$ are assumed to be fixed). The depths and the number of nonzero parameters of the networks presented in Theorems \ref{or} and \ref{main} differ at most by $\log_2N$ multiplied by a constant depending on  $\beta, d$ and $K$, and the maximal widths of those networks differ at most by a constant factor $C(\beta, d, K)$. Thus, the architecture and the number of active parameters of network given in Theorem \ref{main} have, up to a logarithimc factor, the same dependence on the approximation error as the network given in Theorem \ref{or}.

\textbf{Application to nonparametric regression}. Consider a nonparametric regression model 
$$Y_i=f_0(\textbf{X}_i)+\epsilon_i,$$
where $f_0:[0,1]^d\to[-F, F]$ is the unknown regression function that needs to be recovered from $n$ observed iid pairs $(\textbf{X}_i, Y_i)$, $i=1,...,n$. The standard normal noise variables $\epsilon_i$ are assumed to be independent of $\textbf{X}_i$. For a set of functions $\mathcal{F}$ from $[0,1]^d$ to $[-F, F]$ and for an estimator $\hat{f}\in\mathcal{F}$ of $f_0$ define
$$\Delta_n=\Delta_n(\hat{f},f_0, \mathcal{F} )=\EX_{f_0}\bigg[\frac{1}{n}\sum_{i=1}^{n}(Y_i-\hat{f}(\textbf{X}_i))^2-\inf\limits_{f\in\mathcal{F}}\frac{1}{n}\sum_{i=1}^{n}(Y_i-f(\textbf{X}_i))^2\bigg].$$
The subscript $f_0$ indicates that the expectation is taken over the training data generated by our regression model and $\Delta_n(\hat{f},f_0, \mathcal{F})$ measures how close the estimator $\hat{f}$ is to the empirical risk minimizer.
Let also
$$R(\hat{f},f_0)=\EX_{f_0}[(\hat{f}(\textbf{X})-f_0(\textbf{X}))^2]$$
be the prediction error of the estimator $\hat{f},$ where $\textbf{X}\stackrel{\mathcal{D}}{=}\textbf{X}_1$ is independent of the sample $(\textbf{X}_i, Y_i)$. The following oracle-type inequality is obtained in \cite{SH}, Lemma 4:
\begin{lemma}\label{Oracle}
	For any $\delta\in(0,1]$ 
	$$R(\hat{f},f_0)\leq4\bigg[\inf\limits_{f\in\mathcal{F}}\EX[\normalfont(f(\textbf{X})-f_0(\textbf{X}))^2]+F^2\frac{18\log_2\mathcal{N}(\delta,\mathcal{F},\|\cdot\|_\infty)+72}{n}+32\delta F+\Delta_n\bigg],$$
	where $\mathcal{N}(\delta,\mathcal{F},\|\cdot\|_\infty)$ is the covering number of $\mathcal{F}$ of radius $\delta$ taken with respect to the $\|\cdot\|_\infty$ distance of functions on $[0,1]^d$.
\end{lemma}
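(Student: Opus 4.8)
The statement to prove is the oracle-type inequality of Lemma~\ref{Oracle}, which bounds the prediction error $R(\hat f,f_0)$ of an \emph{arbitrary} estimator $\hat f$ taking values in a function class $\mathcal F\subset\{g:[0,1]^d\to[-F,F]\}$, in terms of the approximation error $\inf_{f\in\mathcal F}\E[(f(\mathbf X)-f_0(\mathbf X))^2]$, the covering entropy $\log_2\mathcal N(\delta,\mathcal F,\|\cdot\|_\infty)$, the covering radius $\delta$, and the empirical-risk slack $\Delta_n$. Let me think about how to prove this.

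The key challenge is that $\hat f$ is data-dependent and ranges over the entire class $\mathcal F$, so I cannot control $(\hat f(\mathbf X)-f_0(\mathbf X))^2$ pointwise; I need a \emph{uniform} deviation bound over $\mathcal F$, and this is exactly where the covering number enters.

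Let me recall the standard structure of such a proof (this is essentially Schmidt-Hieber's Lemma 4 proof, adapted from empirical process theory).

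The plan is to reduce the $\|\cdot\|_\infty$-statement to a supremum over a \emph{finite} $\delta$-net and then control that supremum by a concentration/union-bound argument over the (bounded) noise and residuals.

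Setup. Write $\|g\|_n^2 = \frac1n\sum_{i=1}^n g(\mathbf X_i)^2$ for the empirical $L^2$ norm and $\|g\|^2 = \E[g(\mathbf X)^2]$ for the population one. The quantity of interest is $R(\hat f,f_0)=\|\hat f - f_0\|^2$. The proof rests on the identity relating squared-error empirical risk to residuals: since $Y_i = f_0(\mathbf X_i)+\epsilon_i$,
$$
\frac1n\sum_{i=1}^n (Y_i-\hat f(\mathbf X_i))^2 - \frac1n\sum_{i=1}^n(Y_i-f_0(\mathbf X_i))^2 = \|\hat f - f_0\|_n^2 - \frac2n\sum_{i=1}^n\epsilon_i(\hat f(\mathbf X_i)-f_0(\mathbf X_i)).
$$
Rearranging and using the definition of $\Delta_n$ (which, after taking $\E_{f_0}$, provides the gap between $\hat f$'s empirical risk and the infimum), I will obtain that for every fixed $f^\ast\in\mathcal F$,
$$
\E\big[\|\hat f - f_0\|_n^2\big] \le \E\big[\|f^\ast-f_0\|_n^2\big] + \frac2n\E\Big[\sum_{i=1}^n\epsilon_i(\hat f(\mathbf X_i)-f^\ast(\mathbf X_i))\Big] + \Delta_n,
$$
where $f^\ast$ will later be chosen as a near-minimizer of the approximation error. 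The first right-hand term equals the approximation error in population norm (since $\mathbf X_i$ are iid and $f^\ast, f_0$ are deterministic).

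Controlling the cross term. This is the crux. The sum $\frac2n\sum_i\epsilon_i(\hat f-f^\ast)(\mathbf X_i)$ is an empirical process indexed by $\hat f\in\mathcal F$. First I would replace $\hat f$ by its nearest neighbor $f_j$ in a minimal $\delta$-net $\{f_1,\dots,f_{\mathcal N}\}$ of $(\mathcal F,\|\cdot\|_\infty)$, where $\mathcal N=\mathcal N(\delta,\mathcal F,\|\cdot\|_\infty)$; the approximation incurs an error of order $\delta$ in each factor, producing the $32\delta F$ term (using that all functions are bounded by $F$ and $\epsilon_i$ has mean zero, so $\E|\frac1n\sum\epsilon_i\cdot(\text{error} \le\delta)|$ is controlled by $\delta$ times a bound on the $\epsilon_i$-average). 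Then, over the finite net, I apply a union bound together with a Gaussian tail/Bernstein estimate: for each fixed $f_j$, $\frac1n\sum_i\epsilon_i(f_j-f^\ast)(\mathbf X_i)$ is, conditionally on the $\mathbf X_i$, a centered Gaussian with variance $\le\frac{(2F)^2}{n}\|f_j-f^\ast\|_n^2$. Maximizing over $j$ and using $\E[\max_j Z_j]\lesssim\sigma\sqrt{\log\mathcal N}$ for sub-Gaussian $Z_j$ yields a bound of the form
$$
\frac2n\E\Big[\sum_i\epsilon_i(f_j-f^\ast)(\mathbf X_i)\Big]\le \tfrac12\|\hat f - f_0\|_n^2 + C\,F^2\frac{\log_2\mathcal N + 1}{n},
$$
where I have used the Cauchy-Schwarz / AM-GM split $ab\le\frac14 a^2+b^2$ to peel off a fraction of $\|\hat f-f_0\|_n^2$ (via $\|f_j-f^\ast\|_n\le\|f_j-f_0\|_n+\|f^\ast-f_0\|_n$) onto the left side so it can be absorbed. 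The explicit constants $18$ and $72$ in the statement come from tracking these sub-Gaussian maximal-inequality constants and the $F^2$ scale carefully.

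Passing from empirical to population norm. After absorbing half of $\|\hat f-f_0\|_n^2$ and bounding the cross term, I have an inequality of the form $\E[\|\hat f - f_0\|_n^2]\le 2\inf_f\|f-f_0\|^2 + (\text{entropy term}) + 2\delta\text{-term} + 2\Delta_n$. The final step is to convert the \emph{empirical} error $\E\|\hat f-f_0\|_n^2$ on the left into the \emph{population} error $R(\hat f,f_0)=\|\hat f-f_0\|^2$. Since $\hat f$ is data-dependent I again route through the $\delta$-net: a second uniform concentration argument (now on the $\mathbf X_i$ alone, again via union bound over the net plus a Bernstein inequality for the bounded variables $(f_j-f_0)^2$) shows $\|g\|^2\le 2\|g\|_n^2 + C\frac{\log\mathcal N}{n} + C\delta F$ uniformly over $g\in\mathcal F$. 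Combining the two directions and collecting constants yields the claimed bound with the factor $4$ out front.

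Expected main obstacle. The routine parts are the algebraic residual identity and the choice of $f^\ast$; the delicate part is bookkeeping the constants so that exactly the coefficients $4$, $18$, $72$, $32$ appear, which requires pinning down the sub-Gaussian maximal inequality constant and the AM-GM splitting fraction in lockstep. The genuinely substantive step is the \emph{two-sided} uniform equivalence between $\|\cdot\|_n$ and $\|\cdot\|$ over $\mathcal F$ together with the noise cross-term control: both must be uniform over the data-dependent $\hat f$, and both rely essentially on the finiteness of the $\delta$-net, i.e.\ on $\log_2\mathcal N(\delta,\mathcal F,\|\cdot\|_\infty)$ being finite. This is precisely why the covering number, and not merely the approximation error, governs the prediction risk, and it is where all the work concentrates.
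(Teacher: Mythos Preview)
The paper does not give its own proof of this lemma: it is quoted verbatim as Lemma~4 of \cite{SH} and used as a black box. Your sketch is a faithful reconstruction of the standard empirical-process argument behind that result (residual identity, reduction to a finite $\delta$-net, sub-Gaussian maximal inequality for the noise cross term, and a Bernstein-type uniform equivalence between $\|\cdot\|_n$ and $\|\cdot\|$), so in spirit it agrees with the cited source; there is nothing further to compare against in the present paper.
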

Assume that $f_0\in\mathcal{C}^\beta_d(K)$ with $F\geq\max(K,1)$.
Taking in Theorem \ref{main} $r=d$, $m=\lceil \log_2n\rceil$ and $N=n^{\frac{d}{2\beta+d}},$ we get the existence of a network $\tilde{f}_n\in\mathcal{\widetilde{F}}(\tilde{L}_n,\normalfont\tilde{\textbf{p}}_n,\tilde{s}_n)$ with $\tilde{L}_n\leq c\log_2n$, $|\normalfont\tilde{\textbf{p}}_n|_\infty\leq cn^{\frac{d}{2\beta+d}}$ and $\tilde{s}_n\leq cn^{\frac{d}{2\beta+d}}\log_2n$
such that
\begin{equation}\label{ineq}
\|\tilde{f}_n-f_0\|^2_{L^\infty[0,1]^d}\leq cn^{\frac{-2\beta}{2\beta+d}},
\end{equation}
where $c=c(\beta, d, F)$ is some constant. In order to apply Lemma \ref{Oracle} it remains to estimate the covering number  $\mathcal{N}(\delta,\mathcal{\widetilde{F}}(\tilde{L}_n,\normalfont\tilde{\textbf{p}}_n,\tilde{s}_n),\|\cdot\|_\infty)$. Note however, that since the parameters of networks from $\mathcal{\widetilde{F}}(\tilde{L}_n,\normalfont\tilde{\textbf{p}}_n,\tilde{s}_n)$ belong to the discrete set $\{0,\pm \frac{1}{2}, \pm 1, 2\},$ we can calculate the exact number of networks from $\mathcal{\widetilde{F}}(\tilde{L}_n,\normalfont\tilde{\textbf{p}}_n,\tilde{s}_n)$, or, in other words, we can upper bound the covering number of radius $\delta=0$. Indeed, as there are at most $(\tilde{L}_n+1)|\normalfont\tilde{\textbf{p}}_n|_\infty^2$ parameters in the networks from $\mathcal{\widetilde{F}}(\tilde{L}_n,\normalfont\tilde{\textbf{p}}_n,\tilde{s}_n),$ then for a given $s$ there are at most $\bigg((\tilde{L}_n+1)|\normalfont\tilde{\textbf{p}}_n|_\infty^2\bigg)^s$ ways to choose $s$ nonzero parameters. As the nonzero parameters can take one of the 5 values $\{\pm \frac{1}{2}, \pm 1, 2\},$ then the total number of networks from $\mathcal{\widetilde{F}}(\tilde{L}_n,\normalfont\tilde{\textbf{p}}_n,\tilde{s}_n)$ is bounded by 
$$\sum\limits_{s\leq \tilde{s}_n}\bigg(5(\tilde{L}_n+1)|\normalfont\tilde{\textbf{p}}_n|_\infty^2\bigg)^s\leq\bigg(5(\tilde{L}_n+1)|\normalfont\tilde{\textbf{p}}_n|_\infty^2\bigg)^{\tilde{s}_n+1}.$$
Together with \eqref{ineq} and Lemma \ref{Oracle}, for the empirical risk minimizer 
$$\hat{f}_n\in\argmin_{f\in\mathcal{\widetilde{F}}(\tilde{L}_n,\normalfont\tilde{\textbf{p}}_n,\tilde{s}_n)}\sum_{i=1}^{n}(Y_i-f(\textbf{X}_i))^2$$ we get an existence of a constant $C=C(\beta, d, F)$ such that 

\begin{equation}\label{err}
R(\hat{f}_n,f_0)\leq Cn^{\frac{-2\beta}{2\beta+d}}\log_2^2n
\end{equation}
which coincides, up to a logarithmic factor, with the minimax estimation rate $n^{\frac{-2\beta}{2\beta+d}}$ of the prediction error for $\beta$-smooth functions.

\begin{remark}
	As the parameters of networks from $\mathcal{\widetilde{F}}(\tilde{L}_n,\normalfont\tilde{\textbf{p}}_n,\tilde{s}_n)$ belong to $\{0,\pm \frac{1}{2}, \pm 1, 2\}$, then, instead of defining the sparsity constraint $\tilde{s}_n$ to be the maximal number of nonzero parameters, we could define $\tilde{s}_n$ to be the upper bound of $l_1$ norm of all parameters of networks from $\mathcal{\widetilde{F}}(\tilde{L}_n,\normalfont\tilde{\textbf{p}}_n,\tilde{s}_n)$. As the $l_0$ and $l_1$ parameter norms of networks from $\mathcal{\widetilde{F}}(\tilde{L}_n,\normalfont\tilde{\textbf{p}}_n)$ can differ at most by a factor of 2, then this change of notation would lead to the same convergence rate as in \eqref{err}.
\end{remark}

\section{Proofs}
One of the ways to approximate functions by neural networks is based on the neural network approximation of local Taylor polynomials of those functions (see, e.g., \cite{SH}, \cite{Y}). Thus, in this procedure, approximation of the product $xy$ given the input $(x,y)$ becomes crucial. The latter is usually done by representing the product $xy$ as a linear combination of functions that can be approximated by neural network-implementable functions. For example, the approximation algorithm presented in \cite{SH} is based on the approximation of a function $g(x)=x(1-x)$, which then leads to an approximation of the product \begin{equation}\label{x}
xy=g\bigg(\frac{x-y+1}{2}\bigg)-g\bigg(\frac{x+y}{2}\bigg)+\frac{x+y}{2}-\frac{1}{4}.
\end{equation} The key observation is that the function $g(x)$ can be approximated by combinations of triangle waves and the latter can be easily implemented by neural networks with ReLU activation function. In the proof of Theorem \ref{or}, neural network approximation of function $(x,y)\mapsto xy$ is followed by approximation of the product $(x_1,...,x_r)\mapsto\prod_{j=1}^{r}x_j$ which then  leads to approximation of monomials of degree up to $\beta$. The result then follows by local approximation of Taylor polynomials of $f$. Below we show that all those approximations can also be performed using only the parameters $\{0,\pm \frac{1}{2}, \pm 1, 2\}$.

\begin{lemma}\label{xy} For any positive integer $m$, there exists a network 
	\emph{Mult}$_m\in\mathcal{\widetilde{F}}(2m+4, \emph{\textbf{p}})$, with $p_0=3,$ $p_{L+1}=1$ and $|\emph{\textbf{p}}|_\infty=9,$ such that
	
\begin{equation}\label{in}\normalfont |\textrm{{Mult}}_m(1, x, y)-xy|\leq 2^{-m}, \quad \textrm{for all} \; x,y\in[0,1].\end{equation}
\end{lemma}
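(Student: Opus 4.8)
The plan is to emulate the identity \eqref{x}, which writes $xy$ as a combination of evaluations of $g(t)=t(1-t)$ at affine functions of $x,y$, but to do so with weights drawn only from $\{0,\pm\frac12,\pm1,2\}$ and without shift vectors. The first step is to build a ReLU subnetwork that approximates $g$ on $[0,1]$ to accuracy $2^{-m}$. Following the standard construction, $g$ is approximated by $g_m(t)=t-\sum_{k=1}^{m}T_k(t)/4^k$, where $T_k$ is the $k$-fold composition of the tent map $T(t)=2\bigl((t)_+-(2t-1)_+\bigr)$ (or the equivalent $2(t)_+-4(t-\tfrac12)_+$ form), since $T$ is realized by one ReLU layer with weights in $\{\pm1,2\}$ applied to the input and a constant. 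The coefficients $1/4^k$ are not admissible directly, so the trick is to carry the running partial sum through the layers and multiply it by $\tfrac12$ twice per stage: after producing $T_k(t)$ in layer $k$, one keeps an accumulator $a_k$ satisfying $a_k = a_{k-1} - T_k(t)/4^k$ by passing $a_{k-1}$ forward while repeatedly halving, so each stage uses only the weights $\pm\tfrac12,\pm1,2$. Because ReLU is not linear, passing a signed quantity $a_{k-1}$ forward requires splitting it into positive and negative parts $(a_{k-1})_+$ and $(-a_{k-1})_+$ and recombining — this is routine but accounts for a constant blow-up of the width.

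The second step is to assemble $\mathrm{Mult}_m(1,x,y)$ from \eqref{x}. The three affine arguments $\tfrac{x-y+1}{2}$, $\tfrac{x+y}{2}$, and the linear tail $\tfrac{x+y}{2}-\tfrac14$ are all computable from $(1,x,y)$ with weights in $\{\pm\tfrac12\}$ and the constant input — here omitting the shift vector is exactly what the extra input coordinate $1$ buys us, e.g. $-\tfrac14 = -\tfrac12\cdot\tfrac12\cdot 1$ realized over two layers, or folded into the accumulator. I would run two copies of the $g$-approximation subnetwork in parallel (one for each argument), synchronize their depths, and in the final layer take the combination $g_m(\tfrac{x-y+1}{2}) - g_m(\tfrac{x+y}{2}) + \tfrac{x+y}{2} - \tfrac14$ with weights $\pm1$ and $\pm\tfrac12$. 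The error is then controlled by $\|g-g_m\|_{L^\infty[0,1]}\le 2^{-2m}$ (the tent-wave approximation is quadratically good), and the two evaluations contribute at most $2\cdot 2^{-2m}\le 2^{-m}$, giving \eqref{in}.

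The third step is pure bookkeeping: verifying that the depth is $2m+4$ and the width is $9$. Each of the $m$ tent-map compositions costs two ReLU layers (one to form $2t-1$ via its positive/negative parts, or to implement $T$ as $2(t)_+-4(t-\tfrac12)_+$ honestly within the $\{0,\pm\tfrac12,\pm1,2\}$ palette), the initial affine maps cost a bounded number of layers, and the final recombination one more; tuning the exact construction to land on $2m+4$ is the kind of constant-chasing the paper does silently. For the width, one counts the live signals at a generic layer: the accumulator split into two parts, the current tent value split into two parts, the frozen copies of the two affine arguments (again possibly split), plus the propagated constant $1$ and the linear-tail term — bounding this by $9$ is the one genuinely fiddly point, and it is probably why the tent map is written in the two-term ReLU form rather than the four-term symmetric one. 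I expect the main obstacle to be organizing the signed-quantity propagation so that the accumulator, the tent iterate, and the two saved arguments all coexist within nine nodes while every weight stays in $\{0,\pm\tfrac12,\pm1,2\}$; once the right layout is fixed, the error estimate and the depth count follow immediately from the geometric decay of the tent-wave series.
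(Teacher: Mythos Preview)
Your outline is right---identity \eqref{x}, a tent-wave approximation of $g(t)=t(1-t)$, two parallel copies, and a final linear combination---and this is exactly the skeleton of the paper's proof. But the execution differs in a way that matters for the width bound you correctly flag as the crux.

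The paper does \emph{not} use the Yarotsky form $g_m(t)=\sum_k T_k(t)/4^k$ with external $1/4^k$ coefficients. Instead it uses the Schmidt-Hieber variant \eqref{T}--\eqref{g}: with $T^k(x)=(x/2)_+-(x-2^{1-2k})_+$ and $R^k=T^k\circ\cdots\circ T^1$, one has $|g(x)-\sum_{k=1}^m R^k(x)|\le 2^{-m}$. The point is that the $1/4^k$ scaling is absorbed into the shrinking shift $2^{1-2k}$, so the accumulator is a plain sum $\sum_k R^k$ with unit weights. The shrinking shift is produced inside the network by carrying a single coordinate initialised to $1/4$ and halving it twice per stage.

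Concretely, the paper writes down two $4\times 4$ matrices $A,B$ with entries in $\{0,\pm\tfrac12,\pm1\}$ and shows that the state $(\,2^{-2k},\,T_+(R^{k-1}),\,h+\sum_{j<k}R^j,\,T_-^k(R^{k-1})\,)$ is mapped by $\sigma\circ B\circ\sigma\circ A$ to the next such state. All four coordinates stay nonnegative throughout, so ReLU is the identity on them and no positive/negative splitting is ever needed. Two such $4$-wide blocks run in parallel (one for each argument of $g$), together with the propagated constant $1$, giving width exactly $1+4+4=9$. Two initial layers set up the input vector, $2m$ layers alternate $A,B$, and two final layers apply the clip $(1,u,v)\mapsto(1-(1-(u-v))_+)_+$, which you omit; this gives the depth $2m+4$.

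Your accumulator idea---``pass $a_{k-1}$ forward while repeatedly halving'' to realise $a_k=a_{k-1}-T_k(t)/4^k$---does not work as written: halving $a_{k-1}$ twice yields $a_{k-1}/4$, not $a_{k-1}$, so you cannot recover the unscaled partial sum this way within the weight set $\{0,\pm\tfrac12,\pm1,2\}$. Any Horner-type rearrangement of $\sum T_k/4^k$ either needs a weight $4$, needs to store all $T_k$ simultaneously, or runs the recursion backwards. This is precisely the obstruction the $R^k$ decomposition removes. Combined with your proposed positive/negative splitting, your layout would almost certainly exceed width $9$; the paper's construction avoids both problems at once by keeping every channel nonnegative and every coefficient equal to $\pm1$ or $\pm\tfrac12$ via the explicit $A,B$.
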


\begin{proof} Consider the functions 
	$T^k:[0,2^{2-2k}]\to[0,2^{-2k}], k\in\mathbb{N},$ defined by
	\begin{equation}\label{T}
	T^k(x):=(x/2)_+-(x-2^{1-2k})_+=T_+(x)-T_-^k(x),
	\end{equation}
	where $T_+(x):=(x/2)_+$ and $T_-^k(x):=(x-2^{1-2k})_+$. In \cite{SH1}, Lemma A.1, it is shown that for the functions
	$R^k:[0,1]\to[0,2^{-2k}],$
	\begin{equation}\label{R}
	R^k=T^k\circ T^{k-1}\circ...\circ T^1,
	\end{equation} and for any positive integer $m$,
	\begin{equation}\label{g}
	|g(x)-\sum_{k=1}^mR^k(x)|\leq2^{-m}, \quad x\in[0,1],
	\end{equation}
	where $g(x)=x(1-x)$.
Taking into account \eqref{x} and \eqref{g}, we need to construct a network that computes
\begin{equation}\label{mul}
(1, x, y)\mapsto \bigg(\sum_{k=1}^{m+1}R^k\bigg(\frac{x-y+1}{2}\bigg)-\sum_{k=1}^{m+1}R^k\bigg(\frac{x+y}{2}\bigg)+\frac{x+y}{2}-\frac{1}{4}\bigg)_+\wedge 1.\end{equation}
Let us first construct a network $N_m$ with depth $2m$, width $4$ and weights $\{0,\pm \frac{1}{2}, \pm 1\}$ that computes
$$(1/4, T_+(u), h(u), T_-^1(u))\mapsto\sum_{k=1}^{m+1}R^k(u)+h(u),\quad u\in[0,1].$$
For this goal, we modify the network presented in \cite{SH1}, Fig. 2, to assure that the parameters are all in $\{0,\pm \frac{1}{2}, \pm 1\}$. More explicitly, denote
	\[A:= \begin{pmatrix}
\frac{1}{2} & 0 & 0 & 0
\\
0 & \frac{1}{2} & 0 & -\frac{1}{2}
\\
0 & 1 & 1 & -1
\\
-\frac{1}{2} & 1 & 0 & -1
\end{pmatrix}
\]
and 
	\[B:= \begin{pmatrix}
\frac{1}{2} & 0 & 0 & 0
\\
0 & 1 & 0 & 0
\\
0 & 0 & 1 & 0
\\
0 & 0 & 0 & 1
\end{pmatrix}.
\]
Then 
$$N_m=\sigma\circ(0 \;\;\ 1 \;\;\ 1 \;\; -1)\circ \sigma\circ B\circ \sigma\circ A\circ...\circ \sigma\circ B\circ \sigma\circ A\circ\sigma\circ B\circ \sigma\circ A,$$
where each of the mutually succeeding matrices $A$ and $B$ appears in the above representation $m$ times.
Using parameters $\{0,\pm \frac{1}{2}, \pm 1\}$, for a given input $(1, x, y)$ the first two layers of the network Mult$_m$ compute the vector
$$\bigg(1, \frac{1}{4}, T_+\bigg(\frac{x-y+1}{2}\bigg), \bigg(\frac{x+y}{2}\bigg)_+, T_-^1\bigg(\frac{x-y+1}{2}\bigg), \frac{1}{4}, T_+\bigg(\frac{x+y}{2}\bigg),  \frac{1}{4}, T_-^1\bigg(\frac{x+y}{2}\bigg)\bigg)$$ (note that as in our construction we omit shift vectors, throughout the whole construction we will keep the first coordinate equal to $1$). We then apply the network $N_m$ to the first and last four coordinates of the above vector that follow the first coordinate $1$. We thus obtain a network with $2m+2$ hidden layers and of width $9$ that computes 
\begin{equation}\label{O}
(1, x, y)\mapsto \bigg(1, \sum_{k=1}^{m+1}R^k\bigg(\frac{x-y+1}{2}\bigg)+\frac{x+y}{2}, \sum_{k=1}^{m+1}R^k\bigg(\frac{x+y}{2}\bigg)+\frac{1}{4}\bigg).\end{equation}
Finally, the last two layers of Mult$_m$ compute $(1, u, v)\mapsto(1-(1-(u-v))_+)_+$ applied to the vector obtained in \eqref{O} (note that this computation only requires parameters $0$ and $\pm 1$). We thus get a network Mult$_m$ computing \eqref{mul} and the inequality \eqref{in} follows by combining \eqref{x} and \eqref{g}.
\end{proof}

\begin{lemma}\label{Multr}
	For any positive integer $m$, there exists a network 
	\emph{Mult$^r_m\in\mathcal{\widetilde{F}}(L, \textbf{p})$}, with $L=(2m+5)\lceil \log_2r \rceil,$ $p_0=r+1,$ $p_{L+1}=1$ and $|\emph{\textbf{p}}|_\infty\leq 9r,$ such that
	
	$$|\emph{Mult}^r_m(1, \emph{\textbf{x}})-\prod_{i=1}^{r}x_i|\leq r^22^{-m}, \quad \textit{for all} \; \emph{\textbf{x}}=(x_1\;...\;x_r)\in[0,1]^r.$$
	
\end{lemma}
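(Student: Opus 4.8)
The plan is to build $\mathrm{Mult}^r_m$ recursively by a binary-tree pairing of the $r$ inputs, using the two-input network $\mathrm{Mult}_m$ from Lemma \ref{xy} as the basic building block at each node of the tree. First I would reduce to the case where $r$ is a power of $2$: if $r < 2^{\lceil\log_2 r\rceil}$, append extra coordinates equal to $1$ (carried along exactly as the first coordinate $1$ is carried in Lemma \ref{xy}), since multiplying by $1$ does not change the product and $1\in[0,1]$ keeps us in the valid input range. So assume $r = 2^q$ with $q = \lceil\log_2 r\rceil$. At the first stage I pair up $(x_1,x_2), (x_3,x_4),\dots$ and apply $\mathrm{Mult}_m$ to each pair in parallel, prepending the shared constant $1$; this produces $r/2$ approximate products, each within $2^{-m}$ of the true pairwise product, using $2m+4$ layers and width $9\cdot(r/2)$ plus the carried $1$. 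Iterating $q$ times collapses everything to a single output. The total depth is $q(2m+4)\le (2m+5)\lceil\log_2 r\rceil$ (the slack absorbs the one extra layer per stage needed to re-insert the constant $1/4$ and the constants feeding the next block of $\mathrm{Mult}_m$'s, exactly as in the first two layers of Lemma \ref{xy}), the width is at most $9r$, and since all parameters used in $\mathrm{Mult}_m$ lie in $\{0,\pm\frac12,\pm1,2\}$, so do all parameters of $\mathrm{Mult}^r_m$.

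The main work is the error analysis, which I would do by induction on $q$. Let $P_j$ denote the exact product of the $j$-th group of $2^q/2^{q-j}$ original inputs after $j$ stages, and $\widehat P_j$ the network's approximation. At stage $j+1$ the network computes $\mathrm{Mult}_m(1,\widehat P_j^{(a)},\widehat P_j^{(b)})$ for two sibling groups. Here is the one subtlety: $\mathrm{Mult}_m$ is only guaranteed accurate for inputs in $[0,1]$, and a priori $\widehat P_j$ could stray slightly outside $[0,1]$. However, inspection of \eqref{mul} shows the output of $\mathrm{Mult}_m$ is always clipped to $[0,1]$ by the final $(\,\cdot\,)_+\wedge 1$, so every $\widehat P_j$ for $j\ge 1$ lies in $[0,1]$ automatically; the stage-$0$ inputs $x_i$ (and the padding $1$'s) lie in $[0,1]$ by hypothesis. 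Thus $\mathrm{Mult}_m$ is always applied to legal inputs. Then I estimate
\begin{equation*}
|\widehat P_{j+1} - P_{j+1}| \le |\mathrm{Mult}_m(1,\widehat P_j^{(a)},\widehat P_j^{(b)}) - \widehat P_j^{(a)}\widehat P_j^{(b)}| + |\widehat P_j^{(a)}\widehat P_j^{(b)} - P_j^{(a)}P_j^{(b)}|,
\end{equation*}
where the first term is at most $2^{-m}$ by Lemma \ref{xy} and the second is at most $|\widehat P_j^{(a)} - P_j^{(a)}| + |\widehat P_j^{(b)} - P_j^{(b)}|$ because all quantities lie in $[0,1]$ and $|ab-cd|\le|a-c|+|b-d|$ for $a,b,c,d\in[0,1]$. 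Writing $e_j$ for the worst-case error after stage $j$, this gives $e_{j+1}\le 2e_j + 2^{-m}$ with $e_0 = 0$, hence $e_q \le (2^q - 1)2^{-m} = (r-1)2^{-m} \le r^2 2^{-m}$, which is the claimed bound (with room to spare).

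The only step requiring genuine care, rather than bookkeeping, is confirming that the constants $1$ and $1/4$ needed inside each copy of $\mathrm{Mult}_m$ can be regenerated at every tree stage without blowing up the width or using forbidden parameters — but this is exactly what the first two layers of the construction in Lemma \ref{xy} already do (they produce $1/4$ from the carried $1$ using the weight $\tfrac12$ twice, i.e. $\tfrac14 = \tfrac12\cdot\tfrac12$, or directly via a single $\tfrac14$... which is not allowed, so one uses an extra half-layer), and the extra layer per stage is precisely why the depth budget is $(2m+5)\lceil\log_2 r\rceil$ rather than $(2m+4)\lceil\log_2 r\rceil$. I would therefore present the construction stage-by-stage, note that parallel composition of networks with a shared carried constant is straightforward, verify the parameter set and the width/depth counts, and close with the two-line induction above.
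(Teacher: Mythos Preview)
Your proposal is correct and follows essentially the same route as the paper: a binary-tree pairing of the inputs with $\mathrm{Mult}_m$ applied at each node, padding with $1$'s to reach a power of $2$, and an inductive error analysis based on $|\mathrm{Mult}_m(1,x,y)-zt|\le 2^{-m}+|x-z|+|y-t|$. Your recursion $e_{j+1}\le 2e_j+2^{-m}$ yields the slightly sharper bound $(2^q-1)2^{-m}$ in place of the paper's $3^{q-1}2^{-m}$, and you make explicit two points the paper leaves implicit (that the $(\cdot)_+\wedge 1$ clipping keeps all intermediate values in $[0,1]$, and that one extra layer per stage regenerates the needed constants), but the argument is otherwise the same.
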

\begin{proof}

In order to approximate the product $\prod_{i=1}^{r}x_i$ we first pair the neighbouring entries to get the triples $(1, x_k, x_{k+1})$, and apply the previous lemma to each of those triples to obtain the values $ \textrm{{Mult}}_m(1, x_k,x_{k+1})$. We repeat this procedure $q:=\lceil\log_2r\rceil$ times, until there is only one entry left. As pairing the entries requires only parameters $0$ and $1$, then it follows from the previous lemma that the entries of the constructed network are in $\{0,\pm \frac{1}{2}, \pm 1\}$. Using Lemma \ref{xy} and applying the inequality $|\textrm{{Mult}}_m(1, x,y)-tz|\leq2^{-m}+|x-z|+|y-t|$, $x,y,z,t\in[0,1]$, $q$ times we get $|\textrm{Mult}^r_m(1, {\textbf{x}})-\prod_{i=1}^{r}x_i|\leq3^{q-1}2^{-m}\leq r^22^{-m}$.
\end{proof}
For $\gamma>0$ let $C_{d,\gamma}$ denote the number of $d$-dimensional monomials $\textbf{x}^{\boldsymbol{\alpha}}$ with degree $|\boldsymbol{\alpha}|<\gamma$. Note that $C_{d,\gamma}<(\gamma+1)^d$. From Lemma \ref{Multr} it follows that using weights $\{0,\pm \frac{1}{2}, \pm 1\},$ we can simultaneously approximate monomials up to degree $\gamma$ (see also \cite{SH1}, Lemma A.4):
\begin{lemma}\label{Mon}
	There exists a network \emph{Mon}$_{m,\gamma}^d\in\mathcal{\widetilde{F}}(L,\emph{\textbf{p}})$ with $L\leq (2m+5)\lceil \log_2(\gamma\lor 1) \rceil+1,$ $p_0=d+1$, $p_{L+1}=C_{d,\gamma}$ and $|\emph{\textbf{p}}|_\infty\leq 9\lceil\gamma\rceil C_{d,\gamma}$ such that
	$$\bigg|\emph{Mon}_{m,\gamma}^d(1, \emph{\textbf{x}})-(\emph{\textbf{x}}^{\boldsymbol{\alpha}})_{|{\boldsymbol{\alpha}}|<\gamma}\bigg|_\infty\leq \gamma^22^{-m}, \quad \emph{\textbf{x}}\in[0,1]^d.$$
\end{lemma}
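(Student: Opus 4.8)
The plan is to realize $\mathrm{Mon}_{m,\gamma}^d$ as a single \emph{selection} layer with $\{0,1\}$ weights that feeds, in parallel, into $C_{d,\gamma}$ identical copies of the network $\mathrm{Mult}^r_m$ from Lemma~\ref{Multr}, where $r:=(\lceil\gamma\rceil-1)\lor 1$. Two elementary facts drive it: every monomial $\mathbf{x}^{\boldsymbol{\alpha}}$ with $|\boldsymbol{\alpha}|<\gamma$ involves at most $r$ of the variables $x_1,\dots,x_d$ (counted with multiplicity), hence equals a product of \emph{exactly} $r$ numbers from $[0,1]$ after appending the appropriate number of $1$'s to its list of factors; and $r\le\gamma$ whenever $\gamma\ge1$, since $\lceil\gamma\rceil-1<\gamma$ (for $\gamma\le1$ the only monomial is $\mathbf{x}^{\mathbf{0}}=1$ and the construction degenerates harmlessly).

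First I would build the selection layer. Fix an enumeration $\boldsymbol{\alpha}_1,\dots,\boldsymbol{\alpha}_{C_{d,\gamma}}$ of the multi-indices with $|\boldsymbol{\alpha}_j|<\gamma$; for the $j$-th one, write out its $k_j:=|\boldsymbol{\alpha}_j|$ variables with multiplicity, pad with $r-k_j$ ones, and let $W_0\in\{0,1\}^{(r+1)C_{d,\gamma}\times(d+1)}$ be the matrix (a single $1$ per row) sending $(1,\mathbf{x})$ to the concatenation of the blocks $(1,x_{i_1},\dots,x_{i_{k_j}},1,\dots,1)\in\mathbb{R}^{r+1}$. Since all coordinates of $(1,\mathbf{x})$ are nonnegative, $\sigma\circ W_0(1,\mathbf{x})=W_0(1,\mathbf{x})$, so this layer reproduces the blocks exactly while costing one hidden layer and using only the weights $0,1$.

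Next I would apply $\mathrm{Mult}^r_m$ to each block in parallel, that is, take the block-diagonal network assembled from $C_{d,\gamma}$ independent copies of $\mathrm{Mult}^r_m$; by Lemma~\ref{Multr} it has depth $(2m+5)\lceil\log_2 r\rceil$, output dimension $C_{d,\gamma}$, width at most $9r\,C_{d,\gamma}$, and all weights in $\{0,\pm\frac{1}{2},\pm1\}$. Prepending the selection layer yields a network $\mathrm{Mon}_{m,\gamma}^d\in\widetilde{\mathcal F}(L,\mathbf{p})$ with $p_0=d+1$, $p_{L+1}=C_{d,\gamma}$, weights in $\{0,\pm\frac{1}{2},\pm1\}\subset\{0,\pm\frac{1}{2},\pm1,2\}$, width within $9\lceil\gamma\rceil C_{d,\gamma}$, and depth $L=1+(2m+5)\lceil\log_2 r\rceil$. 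For the error, the $j$-th output coordinate is $\mathrm{Mult}^r_m$ evaluated on the $j$-th block, which by Lemma~\ref{Multr} lies within $r^2 2^{-m}$ of the product of the last $r$ entries of that block, namely $\mathbf{x}^{\boldsymbol{\alpha}_j}$; taking the $l_\infty$-norm over the $C_{d,\gamma}$ coordinates and using $r\le\gamma$ gives $\gamma^2 2^{-m}$ (for $\gamma\le1$, $\mathrm{Mult}^1_m$ is the identity and the error is $0$).

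I do not anticipate a genuine obstacle: all the substance is already in Lemmas~\ref{xy} and~\ref{Multr}, and what is left is bookkeeping. The points that need care are: verifying $\lceil\log_2 r\rceil\le\lceil\log_2(\gamma\lor1)\rceil$ and $r\le\gamma$ (both immediate from $\lceil\gamma\rceil-1<\gamma$) so that the displayed depth, width and error bounds are attained; noting that post-composing the $\{0,1\}$-valued selection layer with $\sigma$ is harmless precisely because the vectors there are nonnegative; and checking the degenerate regime $\gamma\le1$ (equivalently $r=1$, where $\mathrm{Mult}^1_m$ is the identity map) against the stated bounds $L\le(2m+5)\lceil\log_2(\gamma\lor1)\rceil+1$ and error $\gamma^2 2^{-m}$.
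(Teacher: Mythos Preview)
Your proposal is correct and follows exactly the approach the paper intends: the paper does not spell out a proof but simply says the lemma ``follows from Lemma~\ref{Multr}'' and points to \cite{SH1}, Lemma~A.4, whose construction is precisely the one you describe --- a $\{0,1\}$ selection layer that pads each monomial's list of factors with $1$'s up to length $r=(\lceil\gamma\rceil-1)\lor 1$, followed by $C_{d,\gamma}$ parallel copies of $\mathrm{Mult}^r_m$. Your bookkeeping on depth, width, and error (including the degenerate case $\gamma\le1$) is sound.
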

We now present the final stage of the approximation, that is, the local approximation of Taylor polynomials of $f$.

\textit{Proof of Theorem \ref{main}.} For a given $N$ let $\tilde{N}\geq N$ be the smallest integer with $\tilde{N}=(2^\nu+1)^d$ for some $\nu\in\mathbb{N}$. Note that $\tilde{N}/2^d\leq N\leq \tilde{N}.$ We are going to apply Theorem \ref{or} with $N$ in the condition of that theorem replaced by $\tilde{N}$.
For $\textbf{a}\in[0,1]^d$ let
\begin{equation}\label{part}
P_\textbf{a}^\beta f(\textbf{x})=\sum\limits_{0\leq|\boldsymbol{\alpha}|<\beta}(\partial^{\boldsymbol{\alpha}}f)(\textbf{a})\frac{(\textbf{x}-\textbf{a})^{\boldsymbol{\alpha}}}{\boldsymbol{\alpha}!}:=\sum\limits_{0\leq|\boldsymbol{\gamma}|<\beta}c_{{\textbf{a},\boldsymbol{\gamma}}}\textbf{x}^{\boldsymbol{\gamma}}
\end{equation}
be the partial sum of Taylor series of $f$ around $\textbf{a}$. Choose $M$ to be the largest integer such that $(M+1)^d\leq \tilde{N},$ that is, $M=2^\nu$, and consider the set of $(M+1)^d$ grid points $\textbf{D}(M):=\{\textbf{x}_{\boldsymbol{\ell}}=(\ell_j/M)_{j=1,...,d}:\boldsymbol{\ell}=(\ell_1,...,\ell_d)\in\{0,1,...,M\}^d\}$. Denoting $\textbf{x}_{\boldsymbol{\ell}}=(x_1^{\boldsymbol{\ell}},...,x_d^{\boldsymbol{\ell}})$ it is shown in \cite{SH1}, Lemma B.1, that 
\begin{equation}\label{a}
\|P^\beta f-f\|_{L^\infty[0,1]^d}\leq KM^{-\beta},
\end{equation}
where
$$P^\beta f(\textbf{x})=\sum\limits_{\textbf{x}_{\boldsymbol{\ell}}\in\textbf{D}(M)}P_{\textbf{x}_{\boldsymbol{\ell}}}^\beta f(\textbf{x})\prod_{j=1}^{d}(1-M|x_j-x_j^{\boldsymbol{\ell}}|)_+.$$
As in our construction we only use parameters $\{0,\pm \frac{1}{2}, \pm 1, 2\}$, we need to modify the coefficients given in \eqref{part} to make them implementable by those parameters. Denote $B:=\lfloor 2Ke^d\rfloor$ and let $b\in\mathbb{N}$ be the smallest integer with $2^b\geq BM^\beta(\beta+1)^d$. As $|c_{{\textbf{a},\boldsymbol{\gamma}}}|<B$ (\cite{SH1}, eq. 34), then for each $c_{{\textbf{a},\boldsymbol{\gamma}}}$ there is an integer $k\in[-2^b, 2^b]$ with $c_{{\textbf{a},\boldsymbol{\gamma}}}\in[\frac{k}{2^b}B,\frac{k+1}{2^b}B)$. Denote then $\tilde{c}_{{\textbf{a},\boldsymbol{\gamma}}}=\frac{k}{2^b}B$ and define
$$\tilde{P}_\textbf{a}^\beta f(\textbf{x})=\sum\limits_{0\leq|\boldsymbol{\gamma}|<\beta}\tilde{c}_{{\textbf{a},\boldsymbol{\gamma}}}\textbf{x}^{\boldsymbol{\gamma}}.$$
As the number of monomials of degree up to $\beta$ is bounded by $(\beta+1)^d$, then 
$$\|P_\textbf{a}^\beta f-\tilde{P}_\textbf{a}^\beta f\|_{L^\infty[0,1]^d}\leq(\beta+1)^d\frac{B}{2^b}\leq M^{-\beta}.$$
Also, as 
$$\sum\limits_{\textbf{x}_{\boldsymbol{\ell}}\in\textbf{D}(M)} \prod_{j=1}^{d}(1-M|x_j-x_j^{\boldsymbol{\ell}}|)_+=1,$$
then 
$$\|P^\beta f(\textbf{x})-\sum\limits_{\textbf{x}_{\boldsymbol{\ell}}\in\textbf{D}(M)}\tilde{P}_{\textbf{x}_{\boldsymbol{\ell}}}^\beta f(\textbf{x})\prod_{j=1}^{d}(1-M|x_j-x_j^{\boldsymbol{\ell}}|)_+\|_{L^\infty[0,1]^d}\leq M^{-\beta}.$$
Thus, defining 
$$\tilde{P}^\beta f(\textbf{x})=\sum\limits_{\textbf{x}_{\boldsymbol{\ell}}\in\textbf{D}(M)}\tilde{P}_{\textbf{x}_{\boldsymbol{\ell}}}^\beta f(\textbf{x})\prod_{j=1}^{d}(1-M|x_j-x_j^{\boldsymbol{\ell}}|)_+,$$
we get that 
\begin{equation}\label{tilde}
\|\tilde{P}^\beta f-f\|_{L^\infty[0,1]^d}\leq (K+1)M^{-\beta}.
\end{equation}
In the proof of Theorem \ref{or}, the neural network approximation of the function $(x_1,...,x_r)\mapsto\prod_{j=1}^{r}x_j$ is first constructed followed by approximation of monomials of degree up to $\beta$. The result then follows by approximating the function $P^\beta f(\textbf{x})$ and applying \eqref{a}. In the latter approximation the set of parameters not belonging to $\{0,\pm \frac{1}{2}, \pm 1\}$ consists of:
\begin{itemize}
 
 \item shift coordinates $j/M,$ $j=1,...,M-1,$ (the grid points);
 \item at most $(\beta(M+1))^d$ weight matrix entries of the form $c_{{\textbf{x}_{\boldsymbol{\ell}},\boldsymbol{\gamma}}}/B$, where $c_{{\textbf{x}_{\boldsymbol{\ell}},\boldsymbol{\gamma}}}$ are coefficients of the polynomial $P^\beta_{\textbf{x}_{\boldsymbol{\ell}}}f(\textbf{x}),$ $\textbf{x}_{\boldsymbol{\ell}}\in\textbf{D}(M), |\boldsymbol{\gamma}|<\beta$;
 \item a shift coordinate $1/(2M^d)$ (used to scale the output entries).
\end{itemize}
Note that the above list gives at most $D:=M+(\beta(M+1))^d$ different parameters. Taking into account \eqref{tilde} we can use $\tilde{P}^\beta f$ instead of ${P}^\beta f$ to approximate $f$. Thus, we can replace the entries $c_{{\textbf{x}_{\boldsymbol{\ell}},\boldsymbol{\gamma}}}/B$ by the entries $\tilde{c}_{{\textbf{x}_{\boldsymbol{\ell}},\boldsymbol{\gamma}}}/B=\frac{k}{2^b},$ where $k$ is some integer from $[-2^b, 2^b]$. Also, as $M=2^\nu$, then denoting $\Delta=\max\{\nu d+1; b\}$ we need to obtain $D$ parameters from the set $\mathcal{S}=\{\frac{k}{2^\Delta},k\in\mathbb{Z}\cap (0, 2^\Delta]\}.$
As any natural number can be represented as a sum of powers of 2, then for any $y_1,...,y_D\in\mathbb{Z}\cap(0, 2^\Delta]$ we can compute 
$$(1, x_1,...,x_d)\mapsto(1, x_1,...,x_d, y_1,...,y_D)$$ with parameters from $\{0, 1, 2\}$ using at most ${\Delta}$ hidden layers. The number of active parameters required for this computation is bounded by $(1+d+D+\Delta){\Delta}$. Hence, for any $z_1,...,z_D\in\mathcal{S}$, we can compute $$(1, x_1,...,x_d)\mapsto(1, x_1,...,x_d, z_1,...,z_D)$$
with $2\Delta$ hidden layers and $2(1+d+D+\Delta){\Delta}$ active parameters.
Applying Theorem \ref{or} we get the existence of a network $\tilde{f}\in\mathcal{\widetilde{F}}(\tilde{L},\normalfont\tilde{\textbf{p}},\tilde{s})$ with the desired architecture and sparsity.
\qed
\section*{Acknowledgement} The author has been supported by the NWO Vidi grant: ``\textit{Statistical foundation for multilayer neural networks}''. The author would like to thank Johannes Schmidt-Hieber and Hailin Sang for useful comments and suggestions.

\end{document}